\def\eqref#1{equation~\ref{#1}}
\def\1{\bm{1}}
\DeclareMathAlphabet{\mathsfit}{\encodingdefault}{\sfdefault}{m}{sl}
\SetMathAlphabet{\mathsfit}{bold}{\encodingdefault}{\sfdefault}{bx}{n}
\newcommand{\E}{\mathbb{E}}
\newcommand{\R}{\mathbb{R}}
\newcommand{\KL}{D_{\mathrm{KL}}}
\crefname{algocf}{alg.}{algs.}
\def\l{\ell}
\def\L{\mathcal L}
\def\R{\mathbb R}
\def\X{\mathcal X}
\def\Y{\mathcal Y}
\def\P{\mathbb P}
\def\W{\mathcal W}
\def\KL{D_{\text{KL}}}
\def\tr{\mathrm{tr}}
\def\gen{\mathrm{gen}}
\def\loc{\mathrm{loc}}
\def\meta{\mathrm{meta}}
\def\yes{{\color{green}\checkmark}}
\def\no{{\color{red}\ding{55}}}
\newtheorem{theorem}{Theorem}
\newcommand{\add}[1]{{}{{ #1}}}
\newcommand\myshade{85}
\colorlet{mylinkcolor}{RoyalBlue}
\colorlet{mycitecolor}{violet}
\colorlet{myurlcolor}{YellowOrange}
\title{Massive Editing for Large Language Model via Meta Learning}
\newcommand*\samethanks[1][\value{footnote}]{\footnotemark[#1]}
\author{Chenmien Tan\textsuperscript{1}\thanks{Work done while interning at HKUST.}, Ge Zhang\textsuperscript{2}\textsuperscript{3}\samethanks[1], Jie Fu\textsuperscript{4}\thanks{Corresponding author.}\\
University of Edinburgh\textsuperscript{1}, University of Waterloo\textsuperscript{2}, 01.AI\textsuperscript{3}, HKUST\textsuperscript{4}\\
\texttt{chenmien.tan@ed.ac.uk, gezhang@umich.edu, jiefu@ust.hk} \\
}
\begin{document}

\maketitle

\begin{abstract}
While large language models (LLMs) have enabled learning knowledge from the pre-training corpora, the acquired knowledge may be fundamentally incorrect or outdated over time, which necessitates rectifying the knowledge of the language model (LM) after the training.
A promising approach involves employing a hyper-network to generate parameter shift, whereas existing hyper-networks suffer from inferior scalability in synchronous editing operation amount \citep{hase21,huang23tp}.
For instance, \citet{mitchell2022mend} mimic gradient accumulation to sum the parameter shifts together, which lacks statistical significance and is prone to cancellation effect.
To mitigate the problem, we propose the \textbf{MA}ssive \textbf{L}anguage \textbf{M}odel \textbf{E}diting \textbf{N}etwork (MALMEN), which formulates the parameter shift aggregation as the least square problem, subsequently updating the LM parameters using the normal equation.
To accommodate editing multiple facts simultaneously with limited memory budgets, we separate the computation on the hyper-network and LM, enabling arbitrary batch size on both neural networks.
Our method is evaluated by editing up to thousands of facts on LMs with different architectures, \emph{i.e.}, BERT-base, GPT-2, T5-XL (2.8B), and GPT-J (6B), across various knowledge-intensive NLP tasks, \emph{i.e.}, closed book fact-checking and question answering.
Remarkably, MALMEN is capable of editing hundreds of times more facts than MEND \citep{mitchell2022mend} with the identical hyper-network architecture and outperforms editor specifically designed for GPT, \emph{i.e.}, MEMIT \citep{meng2023memit}.
Our code is available at \url{https://github.com/ChenmienTan/malmen}.
\end{abstract}

\section{Introduction}

Large language models (LLMs) have exhibited the ability to acquire knowledge from pre-training corpora and demonstrated promising performance in knowledge-intensive NLP tasks such as fact verification and question answering \citep{thorne18fever,petroni19, roberts20}.
However, such knowledge may be factually incorrect and outdated over time.
For instance, a language model (LM) trained before 2023 probably predicts ``Paris Saint-Germain'' rather than ``Inter Miami CF'' when prompted with ``What sports team does Messi play for?''.
A straightforward remedy is to fine-tune the model on the corrected datasets, whereas such an approach suffers the risk of overfitting and catastrophic forgetting \citep{Kirkpatrick17,zhu20}.
It is challenging to edit the model precisely as the knowledge is implicitly and distributionally encoded in the parameters of LM.
An ideal editing method is expected to be (\romannumeral1) \emph{generalizable}, where the model behaves consistently across a different formulation of an injected fact and (\romannumeral2) \emph{local}, where the updates do not affect the remainder of the acquired knowledge \citep{decao2021ke,mitchell2022mend}.

Several lines of research have investigated editing LM effectively and locally \citep{dai21kn,huang23tp}, among which an emerging approach entails training a hyper-network to generate updated parameters \citep{ha16hypernet,decao2021ke,hase21,mitchell2022mend}.
Unlike fine-tuning, hyper-networks can explicitly designate editing generalizability and locality as objectives, learning to uphold the consistency and reliability of the LM \citep{decao2021ke,hase21,mitchell2022mend}.
Recognizing that pre-trained weight is a good initialization, \cite{decao2021ke,hase21,mitchell2022mend} predict the parameter shift rather than directly generating the updated parameter.
Existing hyper-networks also generate the parameter shift conditioned on the standard fine-tuning gradient as the gradient serves as a viable starting point for model editing and provides rich insights into how knowledge is encoded within the LM \citep{decao2021ke,hase21,mitchell2022mend}.

Although existing hyper-networks have shown notable performance in editing a single or few facts, they exhibit limited scalability in synchronous editing operation amount \citep{hase21,huang23tp}.
Due to the consideration of computation complexity, the language model is typically frozen when training the hyper-networks \citep{decao2021ke,hase21,mitchell2022mend}.
A following downside is that the hyper-networks tend to overfit to the present state of the LM, allowing for only a few updates before necessitating re-training to adapt to the LM's new state, which renders meta-learning computationally expensive for mass editing in actual industrial scenarios \citep{hase21,huang23tp}. 
\cite{hase21} sought to extend meta-learning to sequential editing but only scale to few, \emph{e.g.}, 10, updates.
In this paper, we focus on another direction, which is to edit multiple facts at once so that the cost of training the hyper-network amortized to each editing is lower.

There are mainly two challenges to the issue: (\romannumeral1) the impact of varying edits on the model parameters may be contradictory, making it difficult to ascertain the parameters effective for all facts to be injected \citep{yu20}; (\romannumeral2) training hyper-networks to edit a large number of facts simultaneously results in substantial memory consumption, potentially exceeding the hardware limit.
Regarding the first challenge, existing hyper-networks resort to mimicking gradient accumulation to sum the parameter shifts together, which lacks statistical significance and is prone to cancellation effect \citep{mitchell2022mend}.
In contrast, we formulate the parameter shift aggregation as a least square problem to seek for the parameter shift effective for all facts to be injected.
For the second challenge, instead of concatenating the hyper-network to the LM, we delineate the computation between the hyper-network and LM. The decomposition permits arbitrary batch sizes on both neural networks, significantly reducing the memory required.

The primary contribution of this work is an LM editing algorithm called MAssive Language Model Editing Network (MALMEN), which is designed for scalability across numerous facts while maintaining commendable editing performance and economical memory consumption.
Empirical evaluations are conducted to edit up to thousands of facts on LMs with diverse architectures, \emph{i.e.}, BERT-base \citep{devlin19bert}, GPT-2 \citep{radford19gpt2}, T5-XL \citep[2.8B;][]{raffel20t5}, and GPT-J \citep[6B;][]{gptj} across various knowledge-intensive NLP tasks, \emph{i.e.}, closed book fact verification and question answering.
We also perform ablation studies to elucidate the effect of design components of MALMEN.


\section{Related Work}\label{sec:review}

Several lines of research have investigated on model editing, including fine-tuning with hard constraint \citep{zhu20}, editing with external memory \citep{mitchell23serac,hartvigsen2023grace,huang23tp}, locate-and-edit  \citep{dai21kn,gupta23memitcsk,hase23}, and meta-learning \citep{sinitsin20enn,decao2021ke,mitchell2022mend}.
We refer readers to \cite{yao23survey} for a comprehensive survey.

\paragraph{Editing with External Memory}
\cite{mitchell23serac} store edits in an explicit memory, utilizing a scope classifier to evaluate the probability that an input falling within the realm of a stored edit point.
If the input matches any point in the storage, the counter-factual model generates output conditioned on the input and edit point.
\cite{huang23tp} address an LM error by incorporating an additional neuron into the feed-forward network (FFN) of the last Transformer block.
By leveraging the sparsity of GeLU activation function \citep{hendrycks16gelu}, the neuron is trained to be only valid for in-scope inputs, then the inserted neuron does not alter the output of unrelated input.

\paragraph{Locate-and-edit}
\cite{dai21kn} employ integrated gradients \citep{sundararajan17} to pinpoint the location of knowledge within LM at the neuron level.
Drawing inspiration from \cite{geva2021keyvalue}, \cite{dai21kn} hand-craft modifications to the value slots corresponding to knowledge neurons to rectify LM outputs.
\cite{meng22rome,meng2023memit} measure the efficacy of hidden states in GPT for fact recall through causal tracing \citep{vig20}, where the representation of the subject's last token within the FFN at intermediate layers to be significant.
On the basis, \cite{meng22rome,meng2023memit} conceptualize the linear layers as a key-value memory association and modify the value of effective hidden states.
However, \cite{hase23} observe that representation denoising provides limited insights into the best layer for model editing.

\paragraph{Meta-learning}
\cite{sinitsin20enn} describe a bi-level meta-learning that identifies model initialization for quick fine-tuning.
\cite{decao2021ke,mitchell2022mend} learn to transform the standard fine-tuning gradient to a more targeted parameter shift, where they mainly focus on building light-weight hyper-networks as a naive multi-layer perception (MLP) that intakes the gradients and outputs the parameter shifts of a linear layer suffer a quadratic complexity with respect to the hidden size.
Specifically, \cite{decao2021ke} pose LSTM \citep{hochreiter97lstm} to project the sentence embedding into rank-1 masks \citep{krueger17} over the gradients; \cite{mitchell2022mend} decompose the gradient into the outer product of keys and value gradients and apply low-rank MLPs \citep{hu22lora} to refine the keys and value gradients.

\section{Problem Formulation}\label{sec:problem}

Let $\X$ be the prompt set and $\Y$ be the answer set, \emph{e.g.}, for fact verification, the answer set is binary, \emph{i.e.}, \{True, False\}; for question answering, the answer set is the vocabulary set.
For each \emph{edit} prompt-answer tuple $(x,y)\in \X\times \Y$, let
\begin{itemize}
    \item $E(x,y)\subseteq \X\times\Y$ be the \emph{equivalent} tuple collection subject to $(x,y)$ such that $(x^e,y^e)$ is semantically equivalent to $(x,y)$ for any $(x^e,y^e)\in E(x,y)$. For example, for the edit tuple (``What is the capital of China?'', ``Beijing.''), an equivalent tuple is (``Of what country is Beijing the capital?'', ``China.'');
    \item $U(x,y)=(\X\times\Y)\backslash E(x,y)$ be the \emph{unrelated} tuple collection subject to $(x,y)$, where $(x^u,y^u)$ is unrelated to $(x,y)$ for any $(x^u,y^u)\in U(x,y)$.
    For the edit tuple example above, an unrelated tuple is (``What is the capital of the United States?'', ``Washington D.C.'').
\end{itemize}

Let $p:\X\times \Y \to [0,1]$ be the LM that maps each prompt-answer tuple $(x,y)$ to the probability $p(y|x)$ that $y$ is the answer to the prompt $x$.
We parameterize the LM by $\W=\{W_\l:\l\in\L\}$ and then denote the LM as $p_\W$, where $W_\l$ is the weight of the linear layer $\l$ and $\L$ is the collection of \add{trainable} linear layers in the LM.

Our goal is to edit $m$ prompt-answer tuples $(x_i,y_i)_{i=1}^m$ simultaneously, generalize to their equivalent tuple collection $\bigcup_{i=1}^m E(x_i,y_i)$, and maintain the the prediction for unrelated tuple collection $\bigcap_{i=1}^m U(x_i,y_i)=(\X\times\Y)\backslash \bigcup_{i=1}^m E(x_i,y_i)$ unchanged.
In this regard, we measure the performance of the editing using the editing success (ES), generalization success (GS), and locality success (LS) defined as follows:

\begin{equation*}
\begin{aligned}
    \text{ES}&=\P_{(x,y)\sim (x_i,y_i)_{i=1}^m}\left[y=\mathop{\arg\max}_{y'\in\Y}p_{\tilde{\W}}(y'|x)\right]\\
    \text{GS}&=\P_{(x^e,y^e)\sim \bigcup_{i=1}^m E(x_i,y_i)}\left[y^e=\mathop{\arg\max}_{y\in\Y}p_{\tilde{\W}}(y|x^e)\right]\\
    \text{LS}&=\P_{(x^u,y^u)\sim \bigcap_{i=1}^m U(x_i,y_i)}\left[y^u=\mathop{\arg\max}_{y\in\Y}p_{\tilde{\W}}(y|x^u)\right]
\end{aligned}
\end{equation*}

where $\tilde{\W}=\{\tilde{W}_\l:\l\in\L\}$ is the post-edit weight.

\section{Method}\label{sec:method}


In this section, we introduce MALMEN, an LM editing hyper-network that generates parameter shifts with generality and locality conditioned on the standard fine-tuning gradient.
MALMEN enjoy commendable scalability in synchronous editing operation amount with the identical architecture with MEND \citep{mitchell2022mend}.
Recall that scaling meta-learning to multiple editing mainly faces two challenges: (\romannumeral1) The parameter shifts corresponding to different facts may be contradictory, making it challenging to determine a parameter shift effective for all facts \citep{yu20}; (\romannumeral2) It is memory demanding to accommodate the representation of numerous facts into the hyper-network simultaneously.
We address these difficulties in the following two subsections, respectively.

\subsection{Aggregating Parameter Shifts using the Normal Equation}

Let us firstly consider a single linear layer with weight $W\in\R^{d'\times d}$, which transforms input (key) $u\in\R^d$ into output (value) $v=W u\in\R^{d'}$ (We omit the bias term for simplicity).
\add{
MEND $g_\theta:\R^d\times \R^{d'}\to \R^{d'\times d}$ leverages low-rank decomposition to transform the raw gradient into a more targeted pseudo gradient.
Specifically, the tuple of key $u$ and value gradient $\nabla_v L$ (where $L$ is the standard fine-tuning loss) are fed into the hyper-network $g_\theta$ to generate parameter shift $S=g_\theta(u,\nabla_v L)\in\R^{d'\times d}$ (see \Cref{sec:mend} for the detail).
When editing $m$ prompt-answer tuples $(x_i,y_i)_{i=1}^m$ (that contains $n$ tokens in total) simultaneously, MEND cache the tuples $(u_j,\nabla_{v_j}L)_{j=1}^n$ and  generate parameter shifts $(S_j)_{j=1}^n$ for all tokens and update the weight $W$ by summing the parameter shifts, \emph{i.e.}, $\tilde{W}\leftarrow W+\sum_{j=1}^n S_j$ \citep{mitchell2022mend}.
}
The procedure of summing the parameter shifts bears resemblance to gradient accumulation in supervised learning, where the stochastic gradient of each batch acts as a Monte Carlo estimation of the expected gradient, rendering their means an unbiased estimator of the expected gradient.
However, in the case of parameter shifts generated by the hyper-network, summing them lacks statistical significance.
We contend that such summing could potentially give rise to a cancellation effect, where the parameter shifts corresponding to differing keys exhibit significant magnitudes that counteract each other \citep{yeh22traclnwe}.

Inspired by \cite{meng22rome,meng2023memit}, we consider the linear layers in the FFNs of Transformers as key-value memories.
Given that a linear layer with weight $W$ transforms key $u$ into value $v=W u$, the effect of a parameter shift $S$ subject to key $u$ is to change the value from $v=W u$ to $W u+Su$.
Now let $(S_1,\dots,S_n)\in\R^{n\times d'\times d}$ be the parameter shifts subject to the key matrix $U=(u_1,\dots,u_n)\in\R^{d\times n}$.
Our aim is to aggregate the parameter shifts, or equivalently, find a single parameter shift $S$ with nearly the same effect.
We formulate it as the following (regularized) least square problem, where $D=(d_1,\dots,d_n)\in\R^{d'\times n}$ is the value difference matrix such that $d_j=S_ju_j,\forall j\in[n]$.
The intuition is that the optimal parameter shift $S^*$ has nearly equivalent effect with parameter shifts $(S_1,\dots,S_n)$ as it approximately maps each key $u_j$ into the value difference $d_j$.
We add the regularization term $\lambda\|S\|_2^2$ to guarantee the numerical stability when $U$ is not row-wise full rank, where $\lambda$ is a learnable parameter.
\begin{equation*}
    \min_{S\in \R^{d'\times d}} \|SU-D\|_2^2+\lambda \|S\|_2^2
    \vspace{-.2cm}
\end{equation*}

The solution to the above optimization problem is the normal equation: $S^*=DU^T(UU^T+\lambda I)^{-1}$.
In this regard, we can modify the weight of a linear layer by a similar procedure with MEND but computing the value difference matrix $D$ and then update using the normal equation instead of summing the parameter shifts.

\begin{wrapfigure}{r}{0.5\textwidth}
\vspace{-1cm}
\begin{algorithm}[H]
    \caption{Editor Inference}\label{alg:infer}
    \KwIn{Edit tuples $(x_i,y_i)_{i=1}^m$}
    $L\leftarrow -\sum_{i=1}^m\log p_\W(y_i|x_i)$\\
    Cache $(u_{\l,j})_{\l\in \L,j\in[n]}$\\
    Back-propagate $L$\\
    Cache $(\nabla_{v_{\l,j}}L)_{\l\in\L,j\in[n]}$\\
    $S_{\l,j}\leftarrow g_\theta(u_{\l,j}, \nabla_{v_{\l,j}}L),\forall \l\in\L,j\in [n]$\\
    {\color{red}$S_\l^*\leftarrow \sum_{j=1}^n S_{\l,j},\forall \l\in\L$} \\
    {\color{blue}$d_{\l,j}\leftarrow S_{\l,j} u_{\l,j} ,\forall \l\in\L,j\in [n]$\\
    $U_\l\leftarrow [\dots,u_{\l,j},\dots],\forall \l\in\L$\\
    $D_\l\leftarrow [\dots,d_{\l,j},\dots],\forall \l\in\L$\\
    $S_\l^*\leftarrow D_\l U_\l^T(U_\l U_\l^T+\lambda_\l I)^{-1},\forall \l\in\L$}\\
    $\tilde{W}_\l\leftarrow W_\l +S_\l^*,\forall \l\in\L$\\
    $\tilde{\W}\leftarrow \{\tilde{W}_\l:\l\in\L\}$\\
\end{algorithm}
\vspace{-1cm}
\end{wrapfigure}

We edit linear layers $\l\in\L$ using the above procedure simultaneously, which yields the algorithm summarized in \Cref{alg:infer}, where $u_{\l,j}$ and $v_{\l,j}$ denote the key and value subject to the linear layer $\l$ and the $j$-th token of the prompt-answer tuples $(x_i,y_i)_{i=1}^m$ \add{and $U_\l$, $D_\l$ and $S_\l^*$ denote the key matrix, value difference matrix and optimal parameter shift of linear layer $\l$, respectively}.
As a comparison, the red line is unique for MEND \citep{mitchell2022mend}, and the blue lines are unique for MALMEN. 
Notice that one may choose $\L$ as a subset of linear layers in the LM by freezing other linear layers or cache $(u_{\l,j},\nabla_{v_{\l,j}}L)$ for a subset of tokens in the prompt-answer tuples $(x_i,y_i)_{i=1}^m$.
Notice also that when editing several layers simultaneously, the changes in former layers will affect the keys to the latter layers, and thus the post-edit keys to the linear layer are different from those in the updating formula.
However, the issue similarly occurs in the standard fine-tuning and does not lead to empirical failure (see \Cref{sec:input_bias} for a detailed discussion).

\subsection{Memory Economic Training}\label{sec:train}

Recall the editing purpose is that the post-edit LM $p_{\tilde{\W}}$ can generalize to the equivalent tuple collection $\bigcup_{i=1}^m E(x_i,y_i)$ and maintain the prediction for unrelated tuple collection $\bigcap_{i=1}^m U(x_i,y_i)$ unchanged.
Following \cite{decao2021ke,mitchell2022mend}, we set the negative log-probability as the generalization loss $L_{\gen}$ and the KL divergence as the locality loss $L_{\loc}$ as follows.
The total meta loss $L_{\meta}$ is a weighted sum of the generalization loss $L_{\gen}$ and locality loss $L_{\loc}$, where $\lambda_{\loc
}$ is a hyper-parameter that governs the locality weight.

\begin{equation}\label{eq:loss}
\begin{aligned}
    & L_{\gen}(\theta)=-\E_{(x^e,y^e)\sim\bigcup_{i=1}^m E(x_i,y_i)}[\log p_{\tilde{\W}}(y^e|x^e)]\\
    & L_{\loc}(\theta)=\E_{(x^u,y^u)\sim\bigcap_{i=1}^m U(x_i,y_i)}[\KL(p_\W(\cdot| x^u)||p_{\tilde{\W}}(\cdot| x^u))]\\
    & L_{\meta}(\theta)= L_{\gen}(\theta)+\lambda_{\mathrm{loc}}L_{\loc}(\theta)
\end{aligned}
\end{equation}


\begin{wrapfigure}{r}{0.5\textwidth}
\vspace{-0.5cm}
\begin{center}
    \includegraphics[width=0.5\textwidth]{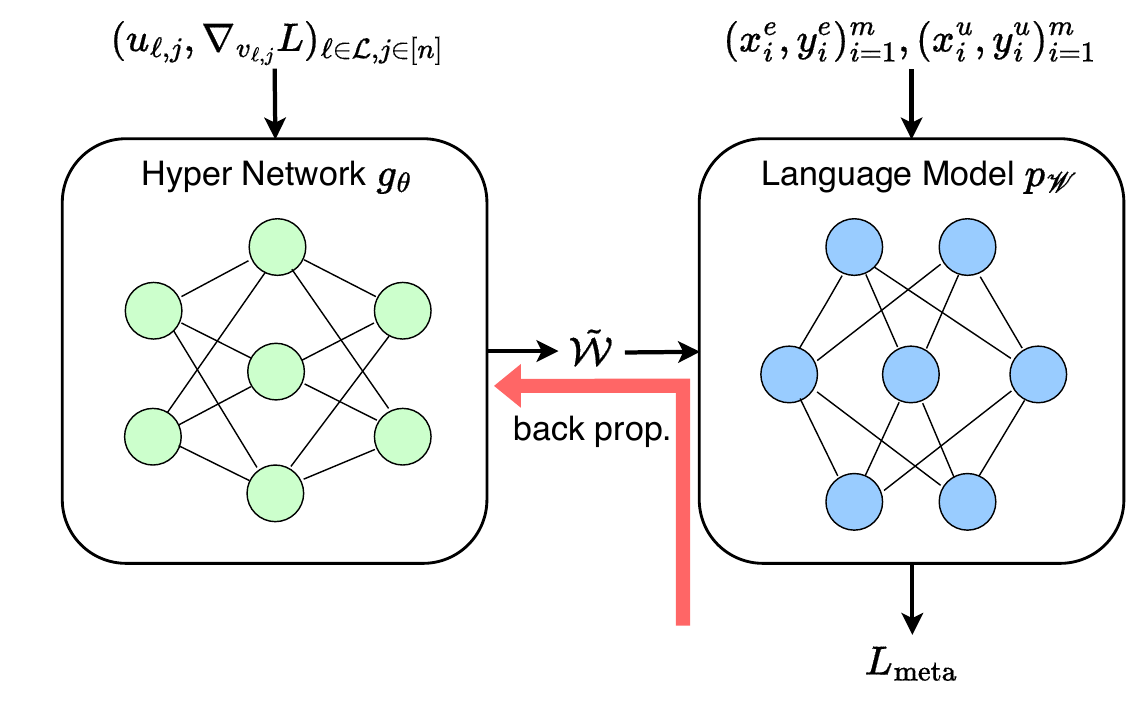}
    \vspace{-0.5cm}
    \caption{The procedure to compute $\nabla_\theta L_{\meta}$ in \cite{mitchell2022mend}.
    The cached tuples are fed into the hyper-network to generate the updated LM parameter, which is differentiable w.r.t. the hyper-network parameter.
    After feeding the equivalent and unrelated prompt-answer tuples into the LM, the meta loss is back propagated along the red arrow.
    }\label{fig:back_prop}
    \vspace{-.5cm}
\end{center}

\end{wrapfigure}
In practice, we sample an equivalent tuple $(x_i^e,y_i^e)$ and locality tuple $(x_i^u,y_i^u)$ for each edit tuple $(x_i,y_i)$ to compute the empirical meta loss.
Notice that the meta loss $L_{\meta}$ is a function of the hyper-network parameter $\theta$.
Traditionally we concatenate the hyper-network to the LM so that the meta loss can be back propagated into the hyper-network \citep{decao2021ke,mitchell2022mend}, as illustrated in \Cref{fig:back_prop}. 
However, it requires caching all intermediate variables in the hyper-network to prepare for the back-propagation.
Recall that the total number of cached tuples $(u_{\l,j},\nabla_{v_{\l,j}} L)_{\l\in\L,j\in[n]}$ equals to the number of linear layers to be edited multiply the number of tokens $n$ in the prompt-answer tuples $(x_i,y_i)_{i=1}^m$, which can be enormous when editing thousands of facts.
It is problematic to compute the meta gradient $\nabla_\theta L_{\meta}$ by following the traditional procedure due to exceedingly high memory costs.

\begin{figure}
    \centering
    \includegraphics[width=0.8\textwidth]{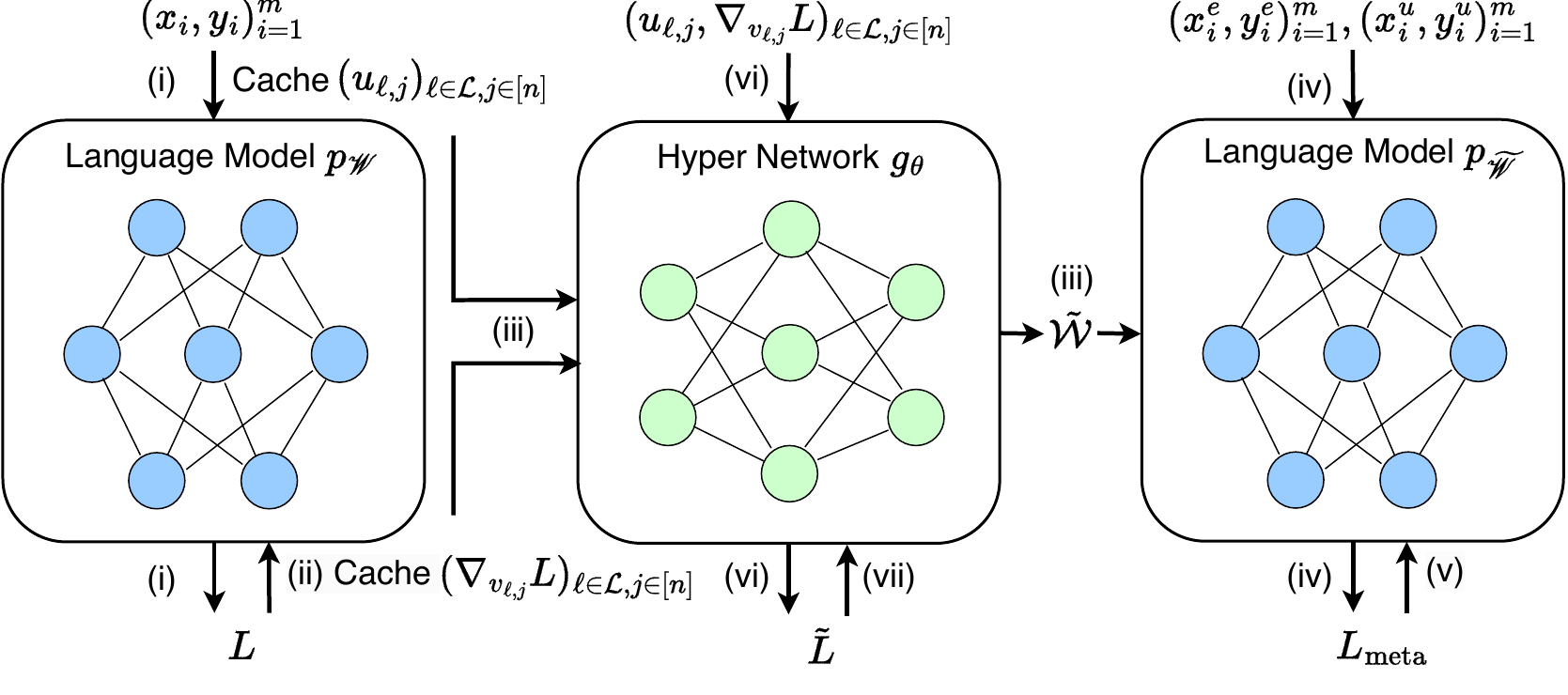}
    \caption{The overall procedure to compute the meta gradient $\nabla_\theta L_{\meta}$:
    (\romannumeral1) The edit prompt-answer tuples $(x_i,y_i)_{i=1}^m$ are fed into the LM $p_\W$, where keys $(u_{\l,j})_{\l\in\L,j\in[n]}$ are cached.
    (\romannumeral2) Back-propagate the standard fine-tuning loss $L$ and cache the gradient with respect to values, \emph{i.e.}, $(\nabla_{v_{\l,j}} L)_{\l\in\L,j\in[n]}$.
    (\romannumeral3) Feed cached tuples $(u_{\l,j},\nabla_{v_{\l,j}} L)_{\l\in\L,j\in[n]}$ into the hyper-network $g_\theta$ to infer the updated LM parameter $\tilde{\W}$ without caching any intermediate variable.
    (\romannumeral4) Feed the equivalent and unrelated tuples $(x_i^e,y_i^e)_{i=1}^m,(x_i^u,y_i^u)_{i=1}^m$ into the post-edit LM $p_{\tilde{\W}}$.
    (\romannumeral5) Back-propagate the meta loss $L_{\meta}$ on linear layers $\l\in\L$ and compute $(\nabla_{D_\l}L_{\meta})_{\l\in\L}$ and $(dL_{\meta}/d\lambda_\l)_{\l\in\L}$.
    (\romannumeral6) Again, feed the cache tuples $(u_{\l,j},\nabla_{v_{\l,j}}L)_{\l\in\L,j\in[n]}$ into the hyper-network $g_\theta$ to generate the value difference matrices $(D_\l)_{\l\in\L}$ but with the training mode.
    (\romannumeral7) Back propagate the proxy loss $\tilde{L}=\sum_{\l\in\L,j\in[n]}\nabla_{d_{\l,j}}L_{\meta}^T d_{\l,j}$.
    All inputs to neural networks, including $(x_i,y_i,x_i^e,y_i^e,x_i^u,y_i^u)_{i=1}^m$ and $(u_{\l,j},\nabla_{v_{\l,j}}L)_{\l\in\L,j\in[n]}$, can be split into batches, where the gradients of the meta loss $L_{\meta}$ and proxy loss $\tilde{L}$ are accumulated.
    }
    \label{fig:malmen}
\end{figure}

\begin{wrapfigure}{r}{0.5\textwidth}
\vspace{-.5cm}
\begin{algorithm}[H]
    \add{
    \caption{Editor Training}\label{alg:train}
    \KwIn{$(x_i,y_i,x_i^e,y_i^e,x_i^u,y_i^u)_{i=1}^m$}
    $\tilde{\W}\leftarrow$ Editor Inference $((x_i,y_i)_{i=1}^m)$\\
    Cache $(u_{\l,j},\nabla_{v_{\l,j}}L)_{\l\in\L,j\in[n]}$ and $(S_\l^*)_{ \l\in\L}$\\
    Compute $L_{\meta}$ following \Cref{eq:loss}\\
    Back-propagate $L_{\meta}$ on the LM\\
    Cache $(\nabla_{\tilde{W}_\l}L_{\meta})_{\l\in\L}$\\
    $U_\l\leftarrow [\dots,u_{\l,j},\dots],\forall \l\in\L$\\
    $M_\l\leftarrow \nabla_{\tilde{W}_\l}L_{\meta}\cdot(U_\l U_\l^T+\lambda_\l I)^{-1},\forall \l\in\L$\\
    $\nabla_{D_\l} L_{\meta}\leftarrow M_\l U_\l,\forall \l\in\L$\\
    $dL_{\meta}/d\lambda_\l\leftarrow -\tr(M_\l S_\l^*),\forall \l\in\L$\\
    $S_{\l,j}\leftarrow g_\theta(u_{\l,j},\nabla_{v_{\l,j}}L),\forall \l\in\L,j\in[n]$\\
    $d_{\l,j}\leftarrow S_{\l,j} u_{\l,j} ,\forall \l\in\L,j\in [n]$\\
    $D_\l\leftarrow [\dots,d_{\l,j},\dots],\forall \l\in\L$\\
    $\tilde{L}\leftarrow \sum_{\l\in\L}\tr(\nabla_{D_\l}L_{\meta}^T D_\l)$\\
    Back-propagate $\tilde{L}$\\
    }
\end{algorithm}
\vspace{-1cm}
\end{wrapfigure}

To allow back-propagation on both neural networks with arbitrary batch size,  we separate the back-propagation on the LM and hyper-network.
\add{
In the first stage, equivalent and unrelated tuples are fed into the LM in batches and the meta gradients are accumulated on linear layers $\l\in\L$ to obtain $(\nabla_{\tilde{W}_\l}L_{\meta})_{\l\in\L}$, whose size is constant w.r.t. the number of edits.
In the second stage, we manually compute the meta gradient w.r.t. each value difference $d_{\l,j}$ when needed following $\nabla_{d_{\l,j}}L_{\meta}=\nabla_{\tilde{W}_\l}L_{\meta}\cdot (U_\l U_\l^T+\lambda_\l I)^{-1}u_{\l,j}$ (See \Cref{thm:d_grad}), then the component of the meta gradient $\nabla_\theta L_{\meta}$ contributed by the tuple $(u_{\l,j},\nabla_{v_{\l,j}}L)$ can be computed through back-propagating the proxy loss $\nabla_{d_{\l,j}}L_{\meta}^T d_{\l,j}$.
This allows us to compute the meta gradient $\nabla_\theta L_{\meta}$ by dividing the cached tuples $(u_{\l,j},\nabla_{v_{\l,j}}L)_{\l\in\L,j\in[n]}$ into batches and accumulating the gradient component contributed by tuples from different batches.
}
Recall that the regularization factor $\lambda_\l$ is a trainable parameter.
We also compute its gradient by manually back-propagating the meta loss, \emph{i.e.}, $d L_{\meta}/d\lambda_\l =-\tr(\nabla_{\tilde{W}_\l}L_{\meta}\cdot(U_\l U_\l^T+\lambda_\l I)^{-2}U_\l D_\l^T)=-\tr(\nabla_{\tilde{W}_\l}L_{\meta}\cdot(U_\l U_\l^T+\lambda_\l I)^{-1}S_\l^{*T})$ (\Cref{thm:lamda_grad}).
Notice that our computation amount and result are identical to the traditional procedure, while the decomposition substantially reduces the memory required.
The overall algorithm to compute the meta gradient $\nabla_\theta L_{\meta}$ is summarized in \Cref{alg:train,fig:malmen}.
After obtaining the meta gradient $\nabla_\theta L_{\meta}$, the hyper-network parameter $\theta$ is updated by the Adam optimizer \citep{kingma15adam}.

\section{Empirical Evaluation}\label{sec:experiment}

A primary objective of MALMEN is to achieve scalability in synchronous editing operation amount, where the algorithm is anticipated to be effective (as measured by ES, GS, and LS) and efficient (in terms of computation time and memory usage).
Our experimentation aims to (\romannumeral1) assess the scalability of the editing performance subject to the number of edits, and (\romannumeral2) elucidate the impact of MALMEN's design elements.

The experiments are implemented on LMs with different architectures, \emph{i.e.}, BERT-base \citep{devlin19bert}, GPT-2 \citep{radford19gpt2}, T5-XL \citep[2.8B;][]{raffel20t5}, and GPT-J \citep[6B;][]{gptj} in various knowledge-intensive NLP tasks, \emph{i.e.}, closed book fact verification and question answering.
For BERT-base, we use the Fact Extraction and VERtification (FEVER) dataset \citep{thorne18fever} with the identical train/val splits with \cite{decao2021ke,mitchell2022mend}, \add{which contains 104,996 training and 10,444 validation samples.}
The unrelated tuples $(x_i^u,y_i^u)_{i=1}^m$ are randomly sampled from FEVER2 \citep{thorne19fever2}.
Before the editing, we concatenate the LM to a linear layer that maps the final hidden state of the BOS (beginning of the sentence) token to a log-probability and fine-tune the whole model on the FEVER dataset (see \Cref{sec:pft} for the detail).
During the editing, to simulate the scenario in practice, the answers in the edit and equivalent tuples are opposite to the ones when fine-tuning.
For GPT-2, T5-XL, and GPT-J, we use the zero-shot Relation Extraction (zsRE) question answering dataset \citep{levy2017zsre} with the identical train/val splits with \cite{decao2021ke,mitchell2022mend}, \add{which has 244,173 training and 27,644 validation instances.}
The unrelated tuples $(x_i^u,y_i^u)_{i=1}^m$ are sampled from Natural Questions \citep[NQ;][]{kwiatkowski19nq}.
Due to the difficulty of fine-tuning all parameters of T5-XL and GPT-J and fine-tune a subset of the parameters can lead to unfair comparison, in contrast to BERT-base, we do not fine-tune GPT-2, T5-XL, and GPT-J before the editing.
We also use the correct answers to the edit and equivalent prompts to edit GPT-2, T5-XL and GPT-J.
Notice that this setting is identical to \cite{meng22rome,meng2023memit} but different with \cite{decao2021ke,mitchell2022mend}.
To better evaluate the locality of editing, we use the publicly available version of T5-XL fine-tuned on NQ.


\subsection{Scaling Curve}\label{sec:scaling}

\begin{wrapfigure}{r}{0.5\textwidth}
\vspace{-0.3cm}
\begin{center}
    \includegraphics[width=0.4\textwidth]{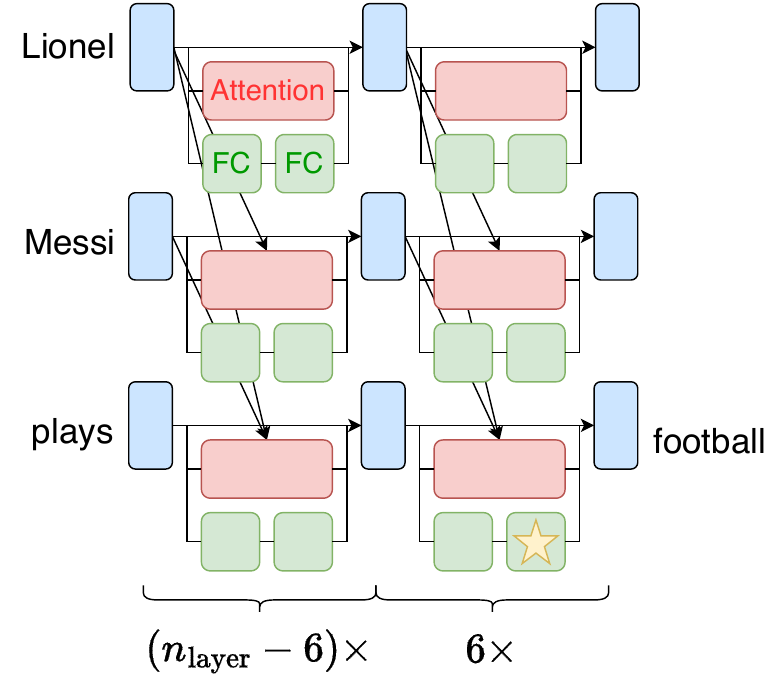}
    \caption{Hooked module in GPT-J.
    The module of interest is highlighted with a yellow pentagram.
    }\label{fig:select}
    \vspace{-.3cm}
\end{center}
\end{wrapfigure}
We compare MALMEN with fine-tuning (FT; \Cref{sec:ft}), MEND, and MEMIT.
To scale MEND to multiple editing with limited memory budget, we use a re-implementation using a similar method with \Cref{sec:train}.
In contrast to \cite{meng2023memit,huang23tp}, who deploy MEND trained by \cite{mitchell2022mend} for making a single edit, we train and evaluate MEND on applying simultaneous edits.
The hyper-parameter of MEND is also tuned to achieve better performance in multiple editing scenario.
We use \textbf{identical hyper-parameters} for MALMEN across different models and tasks.
For BERT-base, GPT-2, and GPT-J, we select $\L$ as the second linear layer in the FFN of the last 6 Transformer blocks and only cache the tokens that output the answer, \emph{e.g.}, for BERT-base, the BOS token.
\add{For T5-XL, we select $\L$ as the second linear layer in the FFN of the last 3 Transformer blocks in both encoder and decoder.}
An example of GPT-J is illustrated in \Cref{fig:select}.

\begin{figure}[h]
    \centering
    \subfigure{\includegraphics[width=0.3\textwidth]{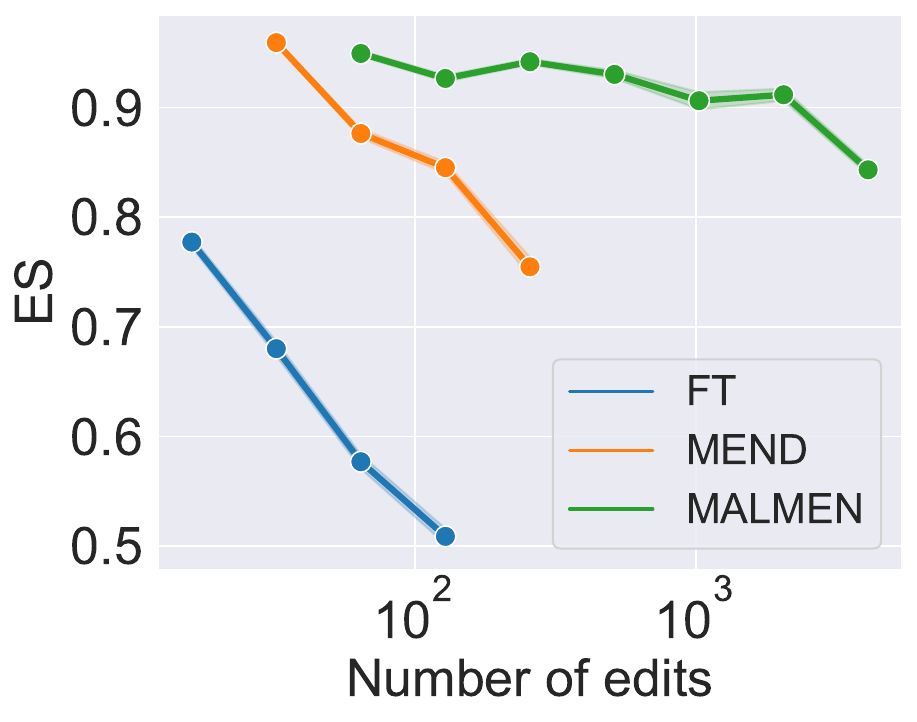}}
    \subfigure{\includegraphics[width=0.3\textwidth]{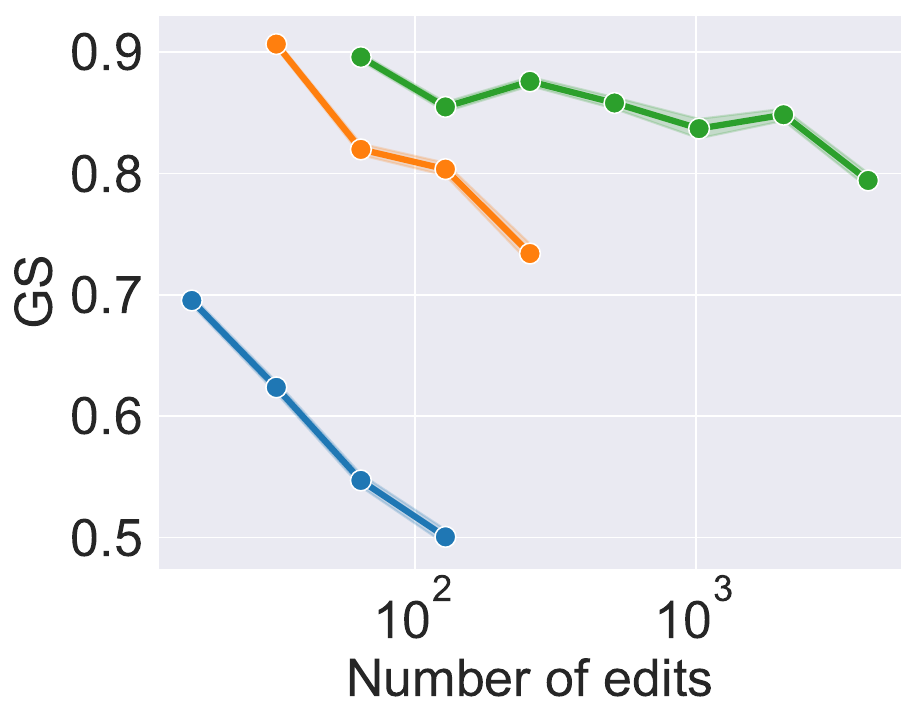}}
    \subfigure{\includegraphics[width=0.3\textwidth]{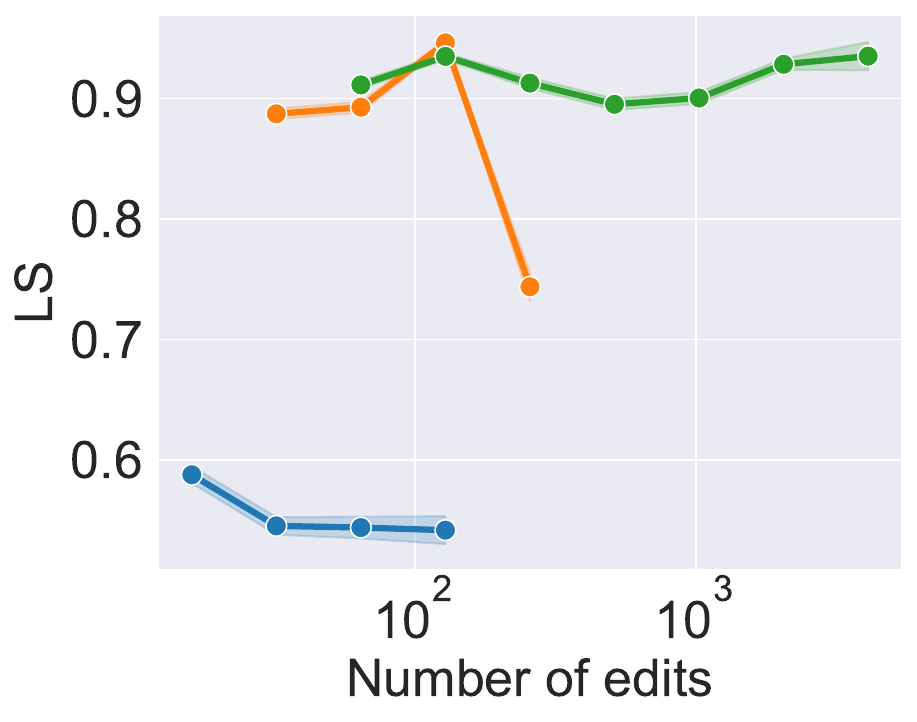}}
    \vspace{-.3cm}
    \caption{Scaling curve of BERT-base (110M)}
    \label{fig:bert-base}
    \vspace{-.5cm}
\end{figure}
\begin{figure}[h]
    \centering
    \subfigure{\includegraphics[width=0.3\textwidth]{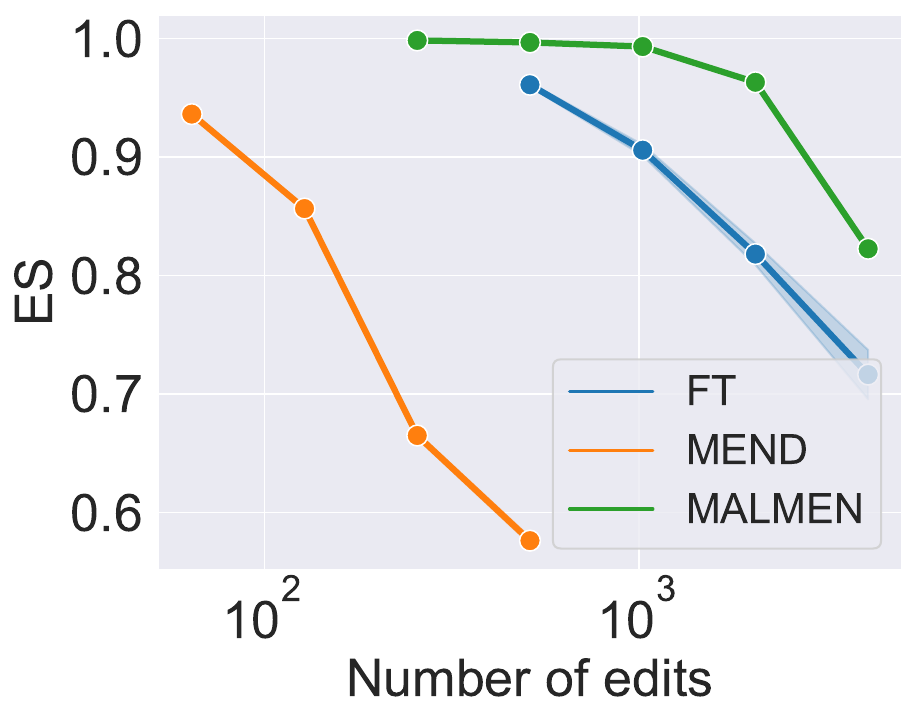}}
    \subfigure{\includegraphics[width=0.3\textwidth]{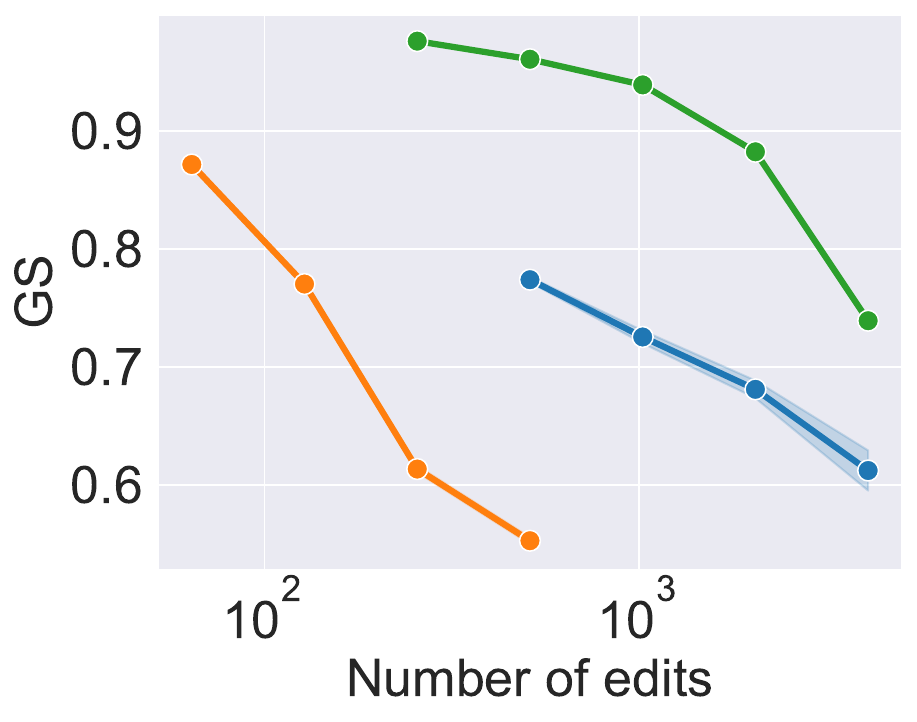}}
    \subfigure{\includegraphics[width=0.3\textwidth]{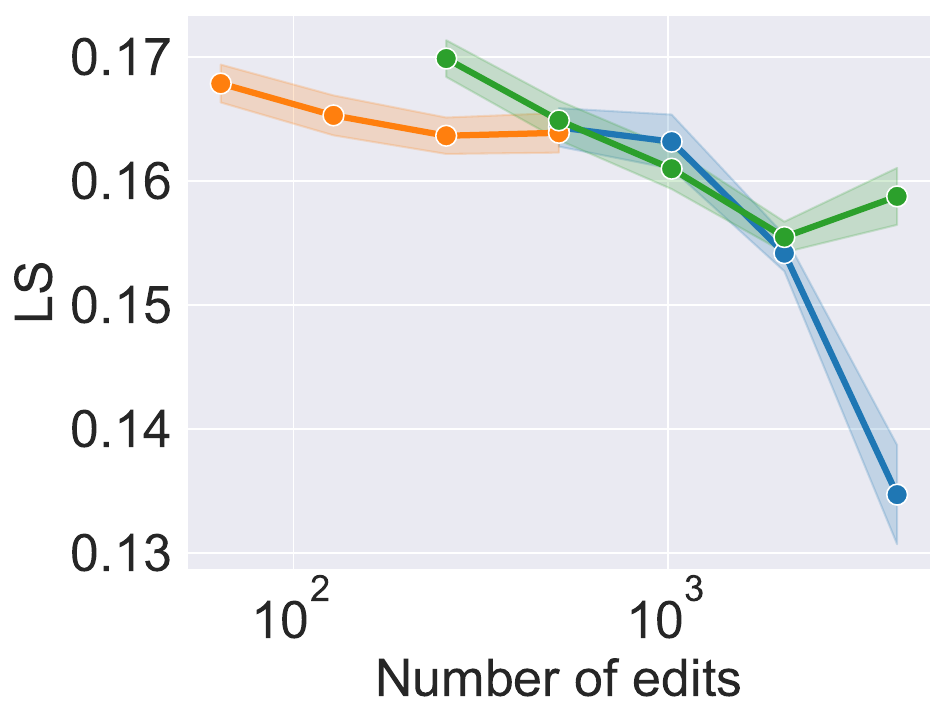}}
    \vspace{-.3cm}
    \caption{Scaling curve of GPT-2 (124M)}
    \label{fig:gpt-2}
\end{figure}

We present the scaling curves of small models, \emph{i.e.}, BERT-base and GPT-2, in \Cref{fig:bert-base,fig:gpt-2}, where the number of edits $m$ is varied from $2^4$ to $2^{12}$ and $2^6$ to $2^{12}$ respectively.
All experiments are repeated at least 3 times where the shaded area represents the standard deviation.
MALMEN demonstrates the strongest scalability across the number of edits.
Compared to MEND, MALMEN can edit over an order of magnitude more facts with similar performance.
On BERT-base, fine-tuning is fragile, where a slightly larger or smaller learning rate causing failure on LS or ES and GS.
This might due to the labels in the editing are opposite to those in FT, allowing LM to achieve low loss by merely reversing the output.
FT also suffers larger gap between ES and GS compared with MEND and MALMEN, indicating that the parameter shifts generated by hyper-networks enjoy better generalization.

\begin{figure}[h]
\add{
    \centering
    \subfigure{\includegraphics[width=0.3\textwidth]{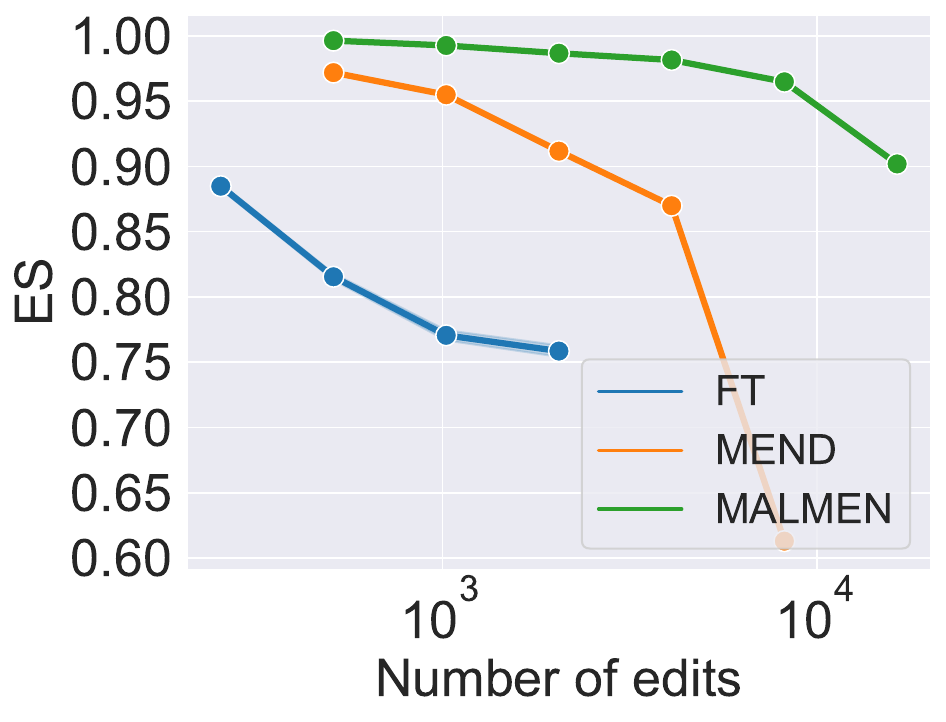}}
    \subfigure{\includegraphics[width=0.3\textwidth]{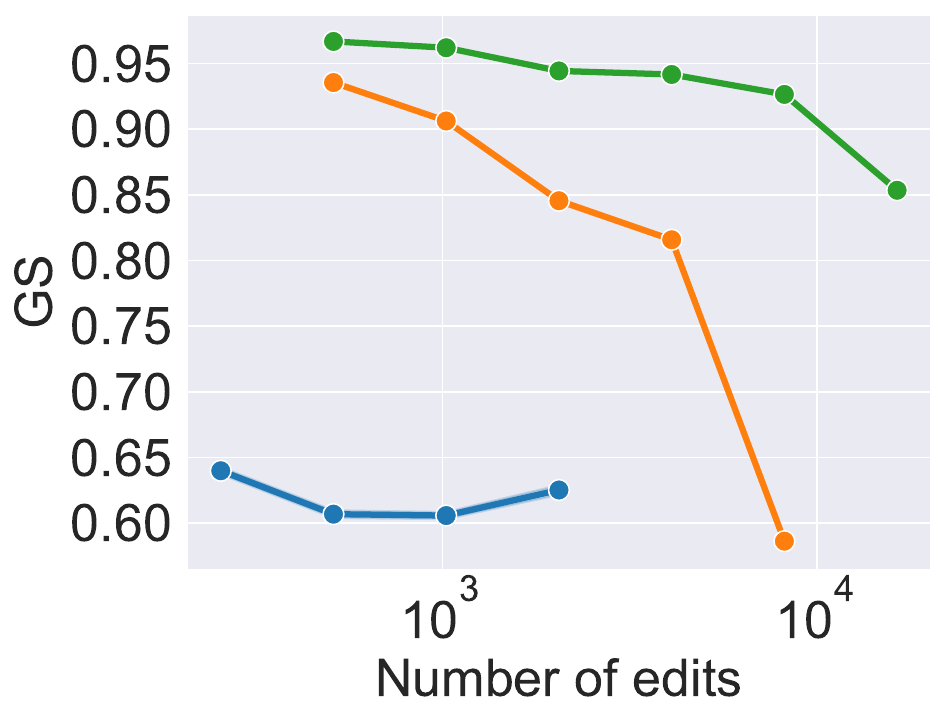}}
    \subfigure{\includegraphics[width=0.3\textwidth]{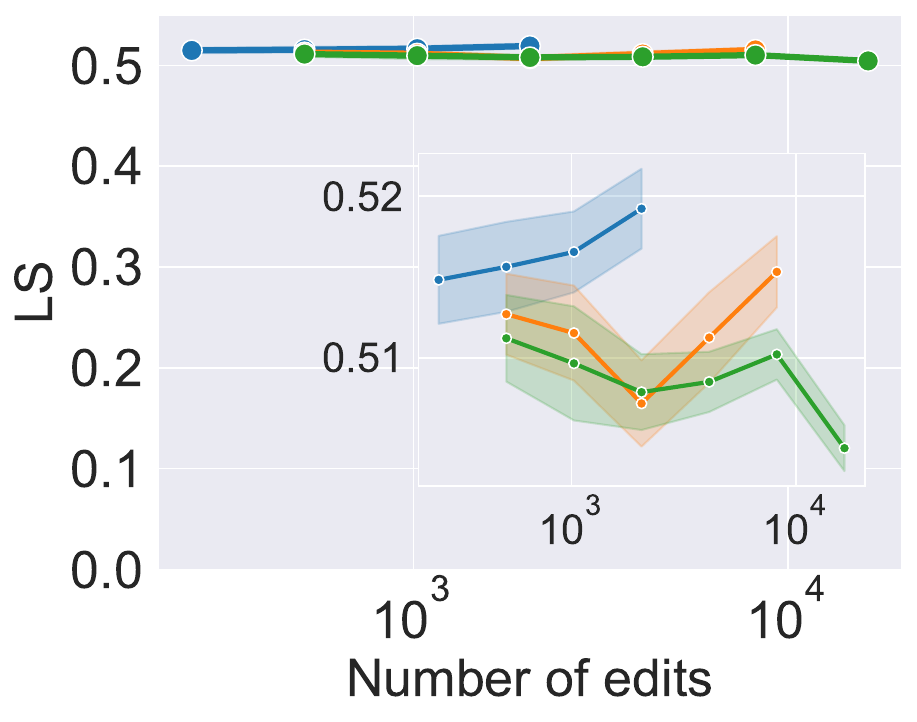}}
    \vspace{-.3cm}
    \caption{Scaling curve of T5-XL (2.8B)}
    \vspace{-.3cm}
    \label{fig:t5-xl}
}
\end{figure}
\begin{figure}[h]
    \centering
    \subfigure{\includegraphics[width=0.3\textwidth]{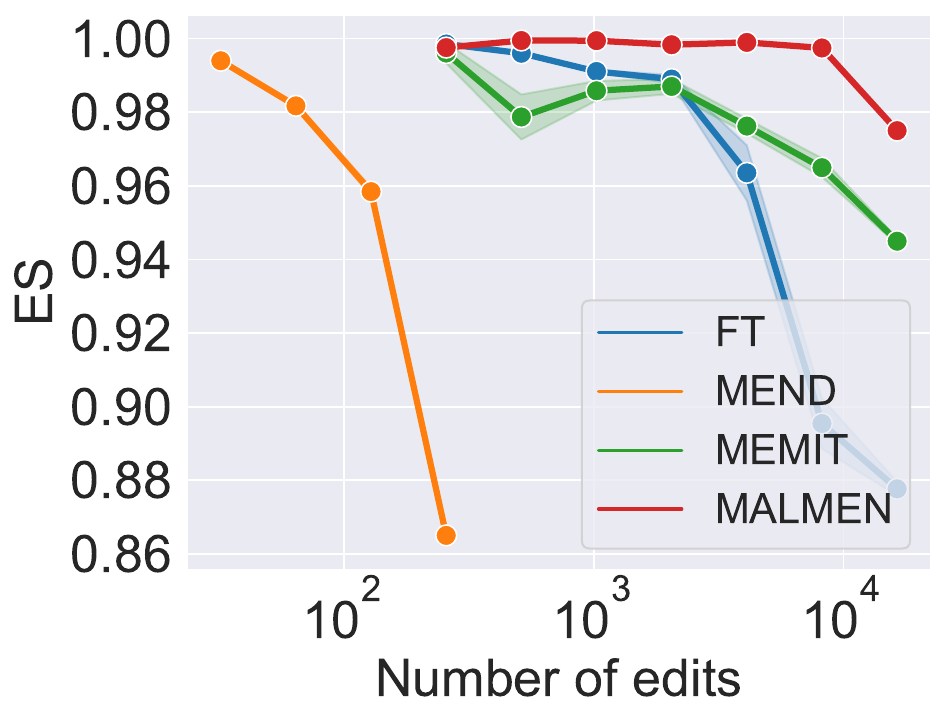}}
    \subfigure{\includegraphics[width=0.3\textwidth]{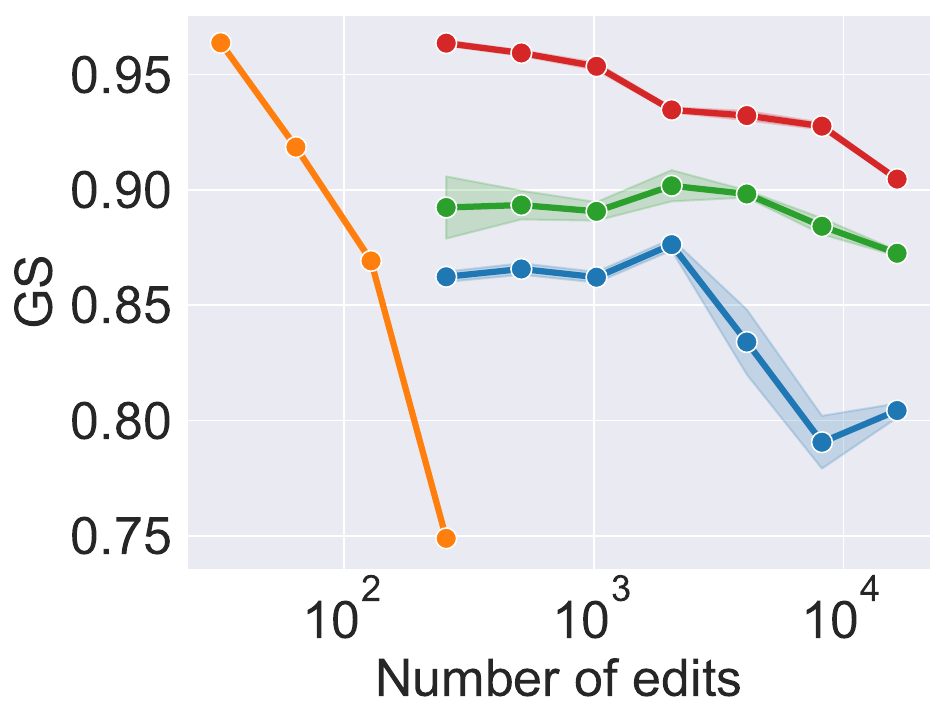}}
    \subfigure{\includegraphics[width=0.3\textwidth]{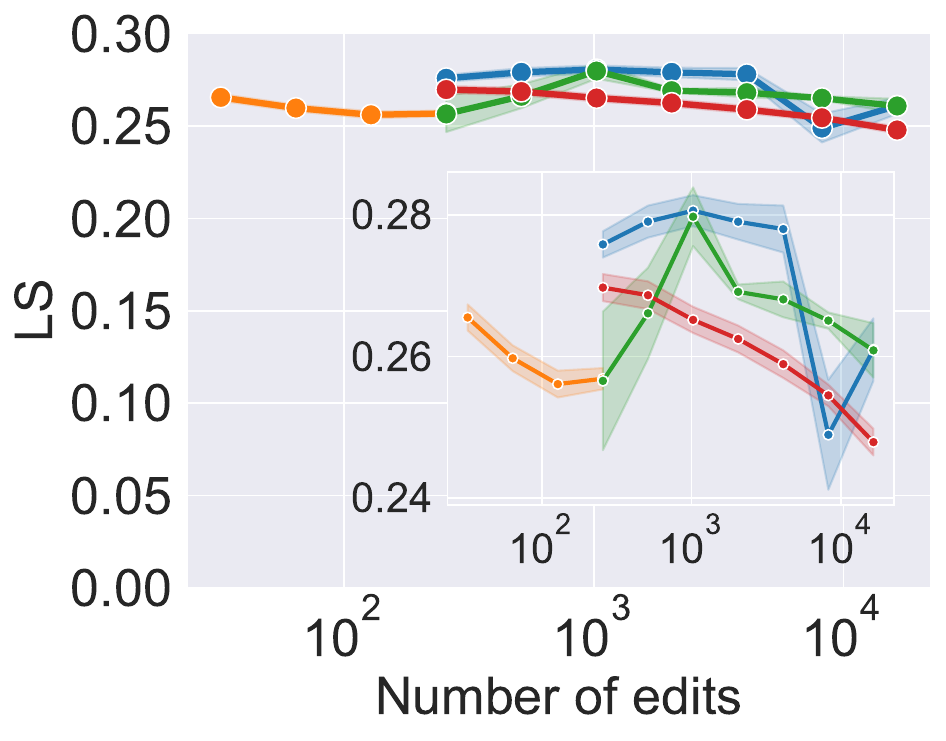}}
    \vspace{-.3cm}
    \caption{Scaling curve of GPT-J (6B)}
    \label{fig:gpt-j}
\end{figure}

\add{
The scaling curves of large models, \emph{i.e.}, T5-XL and GPT-J, are illustrated in \Cref{fig:t5-xl,fig:gpt-j}, where the number of edits $m$ is varied from $2^8$ to $2^{14}=16,384$ and $2^5$ to $2^{14}$.
As the model scale increases, the performance of all editing methods improve.
MALMEN maintains strong scalability, although its LS is slightly lower than the lead (the gap is consistently smaller than 0.02).
}
It is remarkble that MALMEN can edit more than two orders of magnitude more facts than MEND on GPT-J with similar performance and outperforms editor specifically designed for GPT, \emph{i.e.}, MEMIT.
As for computation time, it takes 12.25 and 33.75 hours in total (including training) for MALMEN and MEMIT to edit 16,384 facts on GPT-J using a single NVIDIA A100 GPU, respectively.

\subsection{Ablation Study}


\begin{table}[h!]
    \centering
    \begin{tabular}{ccccccc}
        \toprule
         & \multicolumn{3}{c}{FEVER, $m=512$} & \multicolumn{3}{c}{zsRE, $m=512$}\\
        \cmidrule(lr){2-4}\cmidrule(lr){5-7}
         & \multicolumn{3}{c}{BERT-base (110M)} & \multicolumn{3}{c}{GPT-2 (124M)}\\
        
        \hline
        Variant & ES (\%) $\uparrow$ & GS (\%) $\uparrow$ & LS (\%) $\uparrow$ & ES (\%) $\uparrow$ & GS (\%) $\uparrow$ & LS (\%) $\uparrow$ \\
        
        \hline
        Sum param. shifts & 79.1 (1.6) & 75.1 (1.9) & 72.9 (3.1) & 52.2 (1.4) & 50.7 (1.4) & \textbf{17.0 (0.5)}\\
        No regularization & – & – & – & 0.0 (0.0) & 0.0 (0.0) & 0.0 (0.0)\\
        Edit first FC in FFN & 50.3 (2.4) & 50.3 (1.7) & 51.0 (3.2) & 97.7 (1.3) & 89.1 (2.6) & 15.8 (1.3)\\
        Cache all tokens & \textbf{97.0 (0.7)} & \textbf{90.0 (1.4)} & \textbf{90.3 (1.4)} & 99.2 (0.3) & 94.0 (0.8) & 16.6 (1.0)\\
        \midrule
        MALMEN & 93.0 (1.6) & 85.8 (2.1) & 89.5 (2.1) & \textbf{99.7 (0.2)} & \textbf{96.1 (0.6)} & 16.5 (1.0)\\
        \bottomrule
    \end{tabular}
    \caption{Ablation study on BERT-base and GPT-2}
    \label{table:ablation}
    \vspace{-.3cm}
\end{table}

\Cref{table:ablation} shows ablations of MALMEN's main difference with MEND, where the numbers in brackets represent the standard deviation.
``Sum param. shifts'' replaces updating using the normal equation by summing the parameter shifts.
``No regularization'' removes the regularization term in the least square problem.
Recall that the input size of the second linear layer in the FFN of BERT-base and GPT-2 is 3072 and 1024, respectively.
When $m=512$, the normal equation is problematic as key matrix is not row-wise full rank.
On BERT-base, training cannot be conducted due to a matrix singularity error. Although there is no error reported on GPT-2 (possibly due to the limitation of computation precision), MALMEN is unable to converge properly.
``Edit first FC in FFN'' turns to edit the first linear layer in the FFN of the last 6 Transformer blocks, which generally achieves inferior performance.
``Cache all tokens'' refers to caching the tuple of key and value gradient for all tokens rather than those output the answer.
It is remarkable that ``Cache all tokens'' enjoys better performance on BERT-base.
However, the advantage reverses when continuing to scale the number of facts to be edited, as illustrated by \Cref{table:ablation-j}.
Moreover, only caching tokens that output the answer also accelerate training and inference (38.9\% fewer time on a single NVIDIA A40 GPU).

\begin{table}[h!]
    \centering
    \begin{tabular}{ccccc}
        \toprule
         & \multicolumn{4}{c}{zsRE, $m=8192$} \\
        \cmidrule(lr){2-5}
         & \multicolumn{4}{c}{GPT-J (6B)} \\
        \hline
        Variant & ES (\%) $\uparrow$ & GS (\%) $\uparrow$ & LS (\%) $\uparrow$ & Time (h) $\downarrow$\\
        
        \midrule
        Cache all tokens & 94.0 (0.3) & 85.5 (0.7) & \textbf{25.9 (0.2)} & 7.2\\
        MALMEN & \textbf{99.7 (0.0)} & \textbf{92.8 (0.3)} & 25.1 (0.3) & \textbf{4.4}\\
        \bottomrule
    \end{tabular}
    \caption{Ablation study on GPT-J}
    \label{table:ablation-j}
\end{table}

\begin{wrapfigure}{r}{0.33\textwidth}
\add{
\vspace{-0.5cm}
\begin{center}
    \includegraphics[width=0.3\textwidth]{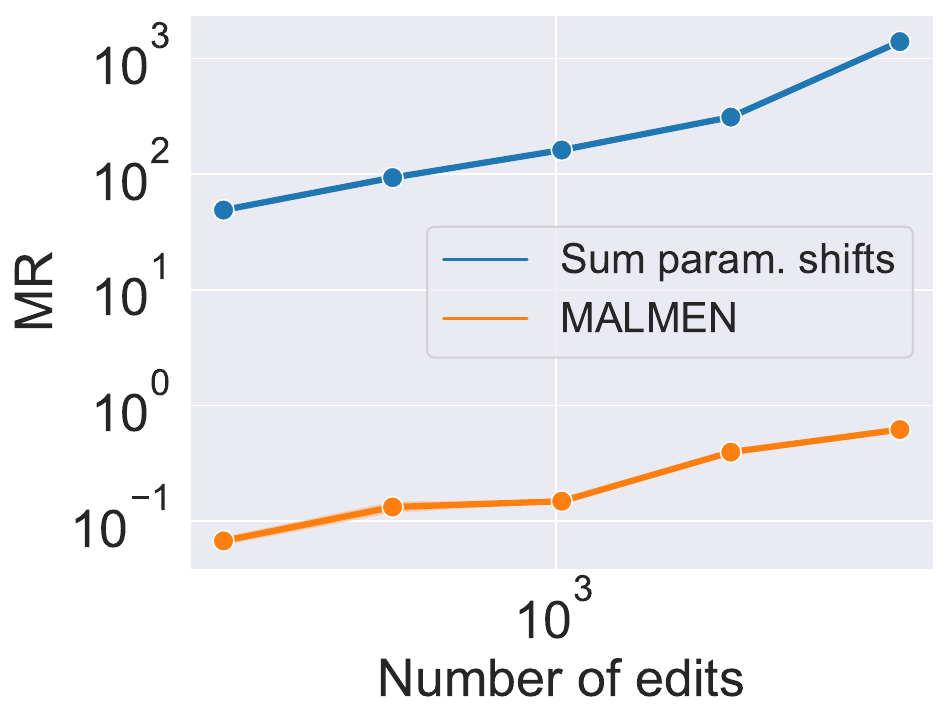}
    \vspace{-.4cm}
    \caption{Cancellation effect}\label{fig:resiual}
    \vspace{-.5cm}
\end{center}
}
\end{wrapfigure}

To \add{see whether the design of normal equation alleviates the cancellation effect, MALMEN is compared with ``Sum param. shifts'' under different number of edits on GPT-2 in \Cref{fig:resiual}.
We define the residual subject to linear layer $\l$ and the $j$-th token as $r_{\l,j}=\|S_{\l}^*u_{\l,j}-d_{\l,j}\|/\|d_{\l,j}\|$, which describes the offset volume caused by the parameter shift aggregation, and measure the cancellation effect by the mean residual (MR), \emph{i.e.}, $\frac{1}{\left|\L\right|n}\sum_{\l\in\L}\sum_{j\in[n]}r_{\l,j}$.
It can be observed that for both methods, MR increases along with the number of edits while the MR of MALMEN is about three orders of mangitude smaller.
}

\begin{wrapfigure}{r}{0.33\textwidth}
\vspace{-0.3cm}
\begin{center}
    \vspace{-.5cm}
    \includegraphics[width=0.3\textwidth]{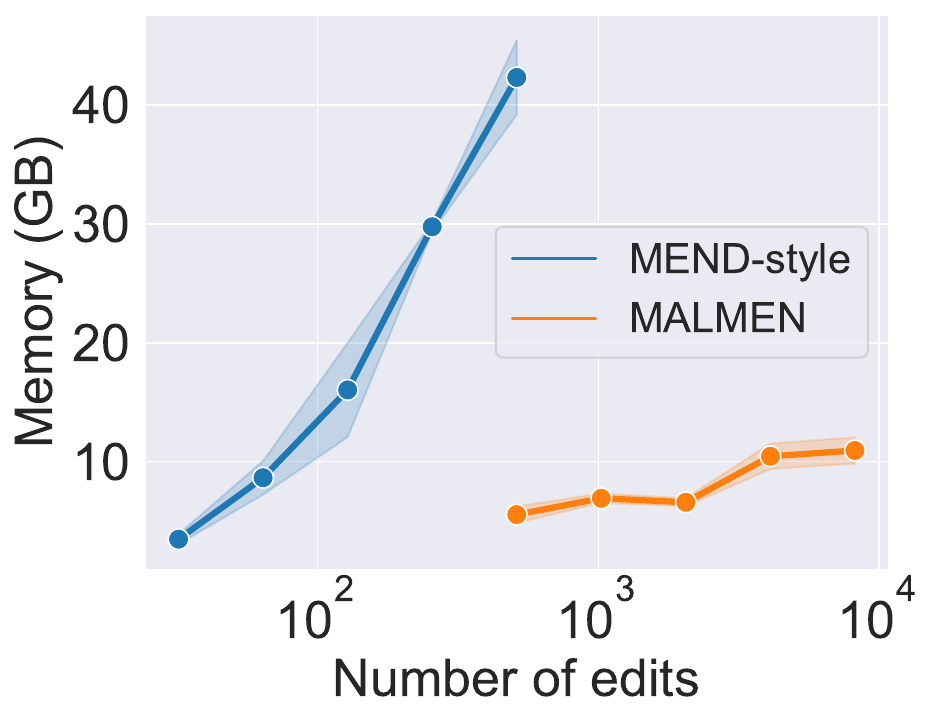}
    \vspace{-.4cm}
    \caption{Memory consumption}\label{fig:memory}
    \vspace{-1cm}
\end{center}
\end{wrapfigure}
We also illustrate the effect of memory economic training by varying $m$ from $2^5$ to $2^{13}$ on GPT-2, as presented by \Cref{fig:memory}.
``MEND-style'' concatenates the hyper-network to the LM as in \Cref{fig:back_prop}, which yields substantially higher memory consumption and quickly exceeds the hardcore limit.
In contrast, the memory consumption of MALMEN grows slowly subject to the number of edits, allowing it to edit thousands of facts on a single GPU.

\section{Discussion}\label{sec:discussion}

We present an LM editing hyper-network scalable to thousands of facts, called MAssive Language Model Editing Network (MALMEN).
Our evaluations demonstrate that, with an identical architecture, MALMEN can edit hundreds of times more facts than MEND, and outshines MEMIT, an editor specifically designed for GPT, thereby positioning itself as a competitive alternative for real-world industrial applications.
MALMEN formulate the parameter shift aggregation as a least square problem, subsequently updating the LM parameter utilizing the normal equation.
To allow arbitrary batch size on both neural networks, the computation on hyper-network and LM are segregated, making it feasible to edit multiple facts with constrained memory limits.

\paragraph{Limitations}
Although MALMEN achieves decent performance in editing thousands of facts simultaneously, it requires computation with linear complexity in relation to the number of facts to compute the meta gradient.
Additionally, MALMEN still fails to generalize to rephrasings not just occurring in prompts. For instance, an ideal editor could infer from the tuple (``What is the capital of France?'', ``Paris.'') to (``Of what country is Paris the capital?'', ``France.''). Ensuring model consistency over a larger scope through local editing remains a challenge.

\bibliography{iclr2024_conference}
\bibliographystyle{iclr2024_conference}

\appendix
\section{Extended Discussion}

\subsection{Detail of MEND}\label{sec:mend}

Specifically, MEND generates the pseudo key $\tilde{u}$ and pseudo gradient $\tilde{\nabla}_{v} L$ conditioned on the key $u$ and the gradient with respect to the value $\nabla_{v}L$, \emph{i.e.}, $(\tilde{u},\tilde{\nabla}_vL)=g_\theta (u,\nabla_vL)$ and outputs rank-1 parameter shift $S=-\eta \tilde{\nabla}_vL \cdot \tilde{u}^T$, where $\eta$ is a parameter that governs the learning rate \citep{mitchell2022mend}.
The purpose of the decomposition is to make the editor parameter $\theta$ has linear complexity with respect to the hidden size of the LM.
As this is not the concern of this paper, we omit the detail and simply write $S=g_\theta(u,\nabla_v L)$ for clarity.

\subsection{Efficient Implementation of MEND and MALMEN}\label{sec:diff}

Explicit computation of the parameter shifts $(S_1,\dots,S_n)\in\R^{n\times d'\times d}$ is avoided as it is spatially demanding.
For MEND, by combining $S^*=\sum_{j=1}^n S_j$ and $S_j=-\eta \tilde{\nabla}_{v_j}L \cdot \tilde{u}_j^T$ (\Cref{sec:mend}), we have $S^*=-\eta \tilde{\nabla}_V L \cdot \tilde{U}^T$, where $\tilde{U}=(\tilde{u}_1,\dots,\tilde{u}_n)$ and $\tilde{\nabla}_V L=(\tilde{\nabla}_{v_1}L,\dots,\tilde{\nabla}_{v_n}L)$.
For MALMEN, by combing $d_j=S_ju_j$ and $S_j=-\eta \tilde{\nabla}_{v_j}L \cdot \tilde{u}_j^T$, we have $d_j=-\eta \tilde{u}_j^Tu_j \tilde{\nabla}_{v_j}L$.
Here we firstly compute the scalar $-\eta \tilde{u}_j^Tu_j$ and then multiply the coefficient with the pseudo gradient $\tilde{\nabla}_{v_j}L$.
After obtaining the value difference matrix $D=(d_1,\dots,d_n)$, we output the optimal parameter shift $S^*$ using the normal equation.

\subsection{Key Shifting also Exists in the Standard Fine-tuning}\label{sec:input_bias}

Consider feeding a single key $u$ into a linear layer with weight $W$ and yielding value $v=Wu$.
Recall that gradient with respect to the weight is $\nabla_W L=\nabla_v L\cdot u^T$.
When feeding the same key $u$ after updating the weight $W$ using gradient descent with learning rate $\eta$, \emph{i.e.}, $\tilde{W}\leftarrow W-\eta \nabla_W L$, the value will be $(W-\eta\nabla_W L)u=v-\eta u^Tu\nabla_v L$, which is equivalent to applying gradient descent on the value $v$ with learning rate $\eta u^Tu$.
This is similar to MALMEN, where the post-edit value of key $u$ is approximately $v+d=v-\eta \tilde{u}^Tu \tilde{\nabla}_v L$ (see \Cref{sec:diff}).
However, as in fine-tuning and MALMEN, typically, several linear layers are updated simultaneously, the key to the latter layers will not be identical to the pre-edit.
Interestingly, for fine-tuning and MALMEN, training more layers usually leads to better performance, which indicates that the post-edit value is not harmed by the key shifting.

\subsection{Fine-tuning BERT-base on the FEVER Dataset}\label{sec:pft}

We fine-tune all parameters of BERT-base using the Adam optimizer \citep{kingma15adam} with a learning rate 3e-5.
To determine the number of epochs, we first fine-tune only on the training split and validate on the validation split.
The highest accuracy occurs at the second epoch.
We then fine-tune BERT-base on all data for two epochs and edit the tuned model.

\subsection{Editing Models by Fine-tuning}\label{sec:ft}

We tune the FFN of the last 3 Transformer blocks for BERT-base and GPT-2 and the FFN of the last 2 Transformer blocks in both encoder and decoder for T5-XL, which is identical to the choice of \cite{mitchell2022mend}.
For GPT-J, we tune the second linear layer in the FFN of layer 21, which is identical to the choice of \cite{meng22rome,meng2023memit}.
To prevent catastrophic forgetting, we use AdamW optimizer \citep{loshchilov18adamw} with learning rate 5e-4 and weight decay 5e-4, which is identical to the choice of \cite{meng22rome,meng2023memit}.
Training more epochs typically yields higher ES and GS and lower LS.
We report the result of 5 epochs.

\subsection{Hyper-parameters of MEND and MALMEN}\label{sec:hyperparams}

We use identical hyper-parameter for MEND and MALMEN as follows.
\begin{table}[h!]
    \centering
    \begin{tabular}{cc}
        \toprule
        Name & Value \\
        \midrule
        Rank of linear transformation in hyper-network & 1920\\
        Number of blocks in hyper-network & 2\\
        Initial learning rate & 1e-6\\
        Meta-learning rate & 1e-5\\
        Locality coefficient & 1\\
        Maximum meta gradient norm & 1\\
        \bottomrule
    \end{tabular}
\end{table}

\add{

\subsection{Justification for the Re-implementation of MEND}

We do not follow the original implementation of MEND in \cite{mitchell2022mend} because (\romannumeral1) it goes beyond our memory budget to edit a large number of facts (See \Cref{fig:memory}); (\romannumeral2) MEND is mainly designed for making a single edits so that the hyper-parameters selected by \cite{mitchell2022mend} is sub-optimal in our setting.
To justify our re-implementation of MEND in multiple editing scenario, we reproduce the experiment in the Section 5.3 of \cite{mitchell2022mend}, where the result is available in \Cref{fig:compare}.
The setting is different with \Cref{sec:scaling} in several aspects: (\romannumeral1) BART-base was fine-tuned by \cite{decao2021ke} on the zsRE dataset before the editing; (\romannumeral2) During the editing, we use alternative answers rather than correct answers; (\romannumeral3) The unrelated tuples are sampled distinct from the edit tuples in the zsRE dataset instead of the NQ dataset; (\romannumeral4) Metrics are computed sequence-wisely rather than token-wisely;
(\romannumeral5) The Edit Success is computed over the edit and equivalent tuples instead of the edit tuples solely.
``Original'' directly copies the result in \cite{mitchell2022mend}.
It can be observed that our implementation consistently achieves higher Edit Success and Accuracy Drawdown.
This is because we select the locality coefficient $\lambda_{\loc}=1$ (See \Cref{sec:hyperparams}) in contrast to $10$ (See Section 3.2 of \cite{mitchell2022mend}).
Our choice is based on the performance of MEND on decoder-only models (\Cref{fig:gpt-2,fig:gpt-j}), where MEND achieves similar LS with other methods but suffers much worse ES and GS.

\begin{table}[h!]
    \add{
    \centering
    \begin{tabular}{ccccc}
        \toprule
         & \multicolumn{2}{c}{Edit Success $\uparrow$}  & \multicolumn{2}{c}{Accuracy Drawdown $\downarrow$} \\
        \cmidrule(lr){2-5}
        Edits & Original & Ours & Original & Ours \\
        
        \midrule
        5 & 0.97 & 0.99 & 0.005 & 0.007\\
        25 & 0.89 & 0.98 & 0.011 & 0.015\\
        75 & 0.78 & 0.89 & 0.011 & 0.026\\
        125 & 0.67 & 0.91 & 0.012 & 0.040\\
        \bottomrule
    \end{tabular}
    \caption{Comparison with the original implementation of MEND}\label{fig:compare}
    }
\end{table}

\subsection{Qualitative Analysis on the Failure Cases of MALMEN}

\begin{table}[h!]
\add{
    \centering
    \begin{tabular}{p{6cm}p{3cm}p{3.5cm}}
        \toprule
        Prompt & Target Answer & Post-edit Answer\\
        \hline
        What family does Mouse hepatitis virus belong? & Murine coronavirus & Murin coronavirus \no\\
        What family does Mouse hepatitis virus belong to? & & Murin coronavirus \no\\
        \hline
        On what date did Kosmos 611 lift off? & 28 November 1973 & 28 September 1973 \no\\
        When was the launch date for Kosmos 611? & & 28 September 1973 \no\\
        \hline
        What was the record label of Sub Noize Souljaz? & Suburban Noize Records & Rawurban Noize Records \no\\
        What was the Sub Noize Souljaz label? & & Surban Noize Records \no\\
        \hline
        Who featured in the film Camera Buff? & Jerzy Stuhr & Jerzy Stuhr \yes\\
        Who played in the movie Camera Buff? & & Jeremia Stuhr \no\\
        \hline
        Which lady Jane Seymour was born to? & Margery Wentworth & Margery Wentworth \yes\\
        Who's Jane Seymour's mother? & & Margery Sehighworth \no\\
        \hline
        Who worked on SR Z class? & Richard Maunsell & Richard Maunsell \yes\\
        Who worked at SR Z class? & & Richard Saricesell \no\\
        \hline
        Which place is Love on a Rooftop in? & San Francisco & Santori \no\\
        Which place does Love on a Rooftop exist in? & & San Francisco \yes\\
        \hline
        What year did Harry Grey, 3rd Earl of Stamford die in? & 16 November 1739 & 17 November 1739 \no\\
        What year did Harry Grey, 3rd Earl of Stamford die? & & 16 November 1739 \yes \\
        \hline
        What kind of family is Pellicia of? & Hesperiidae & Psperiidae \no\\
        Which family is Pellicia's? & & Hesperiidae \yes\\
        \bottomrule
    \end{tabular}
    \caption{Edit and equivalent tuple failure cases
    }
    \label{table:edit}
}
\end{table}

We provide some failure examples of editing T5-XL on the zsRE dataset using MALMEN in \Cref{table:edit,table:unrel}.
The first and second lines in each row of \Cref{table:edit} are the edit tuple and equivalent tuple, respectively.
The prediction of the post-edit LM usually does not deviate too far from the target answer, where the errors generally only occur on individual tokens.
When the LM makes mistakes on the edit tuple, it often commits the same errors on the equivalent tuple.
It is noteworthy that in some cases, the LM can correctly answer the equivalent prompt even if it makes mistake on the edit prompt.

\begin{table}[h!]
\add{
    \centering
    \begin{tabular}{p{6cm}p{3cm}p{3.5cm}}
        \toprule
        Prompt & Pre-edit Answer & Post-edit Answer\\
        \hline
        Where are the spanish steps located in italy? & Rome \yes & Naples \no\\
        \hline
        What is it called when you believe in greek gods? & Hellenism \yes & Hellennism \no\\
        \hline
        Where did they live in sex and the city? & New York City \yes & San York City \no\\
        \hline
        What land mass was north america a part of about 300 million years ago? & Pangaea \yes & Lauraangaea \no\\
        \hline
        Who holds the record for most platinum albums? & AC/DC \yes & Michael/DC \no\\
        \bottomrule
    \end{tabular}
    \caption{Unrelated tuple failure cases
    }
    \label{table:unrel}
}
\end{table}

\section{Proofs}

\begin{theorem}\label{thm:d_grad}
    Suppose that $U\in\R^{d\times n}$, $f:\R^{d'\times n}\to\R^{d'\times d},D\mapsto DU^T(UU^T+\lambda I)^{-1}$, and $g:\R^{d'\times d}\to\R$ is a differentiable function.
    Then, $\nabla (g\circ f)=\nabla g\cdot(UU^T+\lambda I)^{-1}U$
\end{theorem}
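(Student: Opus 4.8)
The plan is to exploit the fact that $f$ is \emph{linear} in its argument $D$, so that the chain rule collapses to a single matrix multiplication. First I would set $A = U^T(UU^T+\lambda I)^{-1}\in\R^{n\times d}$, so that $f(D) = DA$ is a fixed linear map. Because $f$ is linear, its differential at any point is simply $df = (dD)\,A$, with no higher-order terms to track.

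Next I would propagate this through $g$ using the Frobenius inner product $\langle X,Y\rangle = \tr(X^TY)$, with respect to which the gradient of $g$ at the point $f(D)$ is $\nabla g\in\R^{d'\times d}$. The chain rule then reads $d(g\circ f) = \langle \nabla g, df\rangle = \langle \nabla g,\,(dD)A\rangle$. Invoking the adjoint identity $\langle X, YA\rangle = \langle XA^T, Y\rangle$ (which is just cyclicity of the trace, since $(XA^T)^TY = AX^TY$ and $\tr(AX^TY)=\tr(X^TYA)$), I would rewrite this as $\langle \nabla g\cdot A^T,\,dD\rangle$. Reading off the coefficient of $dD$ identifies $\nabla(g\circ f) = \nabla g\cdot A^T$.

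Finally I would simplify $A^T$. Since $UU^T+\lambda I$ is symmetric, so is its inverse, and therefore $A^T = \bigl[(UU^T+\lambda I)^{-1}\bigr]^T U = (UU^T+\lambda I)^{-1}U$, which yields the claimed formula $\nabla(g\circ f) = \nabla g\cdot(UU^T+\lambda I)^{-1}U$.

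There is no genuine obstacle here; the only things to watch are the matrix-calculus bookkeeping and the symmetry step. In particular, I would sanity-check dimensions: $\nabla g$ is $d'\times d$ while $\nabla(g\circ f)$ must come out $d'\times n$, which is consistent since $(UU^T+\lambda I)^{-1}U$ is $d\times n$, giving the product shape $d'\times d$ times $d\times n = d'\times n$. The symmetry of $UU^T+\lambda I$ is what lets me drop the transpose cleanly, so I would state that observation explicitly rather than leave it implicit.
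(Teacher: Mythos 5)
Your proposal is correct and follows essentially the same route as the paper's proof: both take the differential of the linear map $D\mapsto DU^T(UU^T+\lambda I)^{-1}$, pair it with $\nabla g$ via the trace/Frobenius inner product, and move the factor across by cyclicity of the trace to read off $\nabla(g\circ f)$. The only difference is cosmetic --- you state explicitly the symmetry of $UU^T+\lambda I$ (and hence of its inverse) that the paper uses implicitly when dropping the transpose, which is a worthwhile clarification but not a different argument.
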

\begin{proof}
    Substituting $df=dD\cdot U^T(UU^T+\lambda I)^{-1}$ into $d(g\circ f)=\tr(\nabla g^T df)$ yields $d(g\circ f)=\tr(\nabla g^TdD\cdot U^T(UU^T+\lambda I)^{-1})=\tr(U^T(UU^T+\lambda I)^{-1}\nabla g^TdD)$.
    Following that $d(g\circ f)=\tr(\nabla(g\circ f)^T dD)$ we complete the proof.
\end{proof}

\begin{theorem}\label{thm:lamda_grad}
    Suppose that $D\in\R^{d'\times n}$, $U\in\R^{d\times n}$, $f:\R\to\R^{d'\times d},\lambda\mapsto DU^T(UU^T+\lambda I)^{-1}$, and $g:\R^{d'\times d}\to\R$ is a differentiable function.
    Then, $d (g\circ f)/d\lambda=-\tr(\nabla g\cdot (UU^T+\lambda I)^{-2}UD^T)$.
\end{theorem}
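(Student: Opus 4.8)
The plan is to carry out a direct matrix-calculus computation, exactly parallel to the proof of \Cref{thm:d_grad} but now differentiating with respect to the scalar $\lambda$ rather than the matrix $D$. First I would abbreviate $A=UU^T+\lambda I$, which is symmetric, and record the one nonroutine ingredient: the derivative of a matrix inverse. Since $dA/d\lambda=I$, the standard identity $d(A^{-1})=-A^{-1}(dA)A^{-1}$ gives $dA^{-1}/d\lambda=-A^{-2}$. Because $f(\lambda)=DU^TA^{-1}$ and the factor $DU^T$ does not depend on $\lambda$, it follows immediately that $df/d\lambda=-DU^TA^{-2}$.

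Next I would invoke the chain rule in the same differential form used in the previous proof, namely $d(g\circ f)=\tr(\nabla g^T\,df)$, so that $d(g\circ f)/d\lambda=\tr(\nabla g^T\,df/d\lambda)=-\tr(\nabla g^T DU^TA^{-2})$. It then remains only to recast this trace into the stated form. I would apply $\tr(M)=\tr(M^T)$ to obtain $-\tr((A^{-2})^TUD^T\nabla g)$, replace $(A^{-2})^T$ by $A^{-2}$ using symmetry of $A$, and finally use the cyclic property of the trace to move $\nabla g$ to the front, yielding $-\tr(\nabla g\,(UU^T+\lambda I)^{-2}UD^T)$, which is exactly the claim.

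There is no serious obstacle here; the only points requiring care are the sign and ordering in the inverse-derivative formula and keeping track of transposes when reshuffling the trace. In particular I would take care to use the symmetry of $UU^T+\lambda I$ (and hence of its powers) at the step where $(A^{-2})^T$ is identified with $A^{-2}$, since that is precisely what allows the $\nabla g$ factor to land on the correct side so as to match the target expression.
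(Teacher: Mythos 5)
Your proof is correct and follows essentially the same route as the paper's: both compute $d\bigl((UU^T+\lambda I)^{-1}\bigr)=-(UU^T+\lambda I)^{-2}\,d\lambda$ via the standard inverse-derivative identity and substitute into $d(g\circ f)=\tr(\nabla g^T\,df)$. The only difference is that you spell out the final transpose/cyclic-trace rearrangement using the symmetry of $UU^T+\lambda I$, a step the paper leaves implicit.
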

\begin{proof}
    Substituting $df=DU^Td(UU^T+\lambda I)^{-1}=-DU^T(UU^T+\lambda I)^{-2}d\lambda$ into $d(g\circ f)=\tr(\nabla g^T df)$ yields $d(g\circ f)=-\tr(\nabla g^TDU^T(UU^T+\lambda I)^{-2})d\lambda $.
    Following that $d(g\circ f)=(d(g\circ f)/d\lambda)d\lambda$ we complete the proof.
\end{proof}



}

\end{document}